\newtheorem{proposition}{Proposition}
\newtheorem{definition}{Definition}
\ificcvfinal\pagestyle{empty}\fi
\begin{document}

\title{Finding Mirror Symmetry via Registration}

\author{Marcelo Cicconet, David G. C. Hildebrand, and Hunter Elliott\\
{\small Image and Data Analysis Core, Harvard Medical School, Boston, MA}
}

\maketitle

\begin{abstract}
Symmetry is prevalent in nature and a common theme in man-made designs. Both the human visual system and computer vision algorithms can use symmetry to facilitate object recognition and other tasks. Detecting mirror symmetry in images and data is, therefore, useful for a number of applications. Here, we demonstrate that the problem of fitting a plane of mirror symmetry to data in any Euclidian space can be reduced to the problem of registering two datasets. The exactness of the resulting solution depends entirely on the registration accuracy. This new Mirror Symmetry via Registration (MSR) framework involves (1) data reflection with respect to an arbitrary plane, (2) registration of original and reflected datasets, and (3) calculation of the eigenvector of eigenvalue -1 for the transformation matrix representing the reflection and registration mappings. To support MSR, we also introduce a novel 2D registration method based on random sample consensus of an ensemble of normalized cross-correlation matches. With this as its registration back-end, MSR achieves state-of-the-art performance for symmetry line detection in two independent 2D testing databases. We further demonstrate the generality of MSR by testing it on a database of 3D shapes with an iterative closest point registration back-end. Finally, we explore its applicability to examining symmetry in natural systems by assessing the degree of symmetry present in myelinated axon reconstructions from a larval zebrafish.
\end{abstract}

\section{Introduction}
Symmetry is frequently found in nature and man-made designs. 
The human visual system exploits this fact to facilitate object recognition \cite{Vetter1994}. 
Similarly, computational tools can take advantage of symmetry for simplified data representation because it implies a considerable degree of information redundancy \cite{Gens2014}. 
Therefore, symmetry detection has great potential utility in practical computer vision applications including object recognition and image compression.

In natural images, any present symmetric objects are often surrounded by clutter or partially occluded. 
This makes symmetry detection challenging, forcing methods to be robust to outliers.
Perhaps as a consequence, the symmetry detection approaches that currently perform best (\cite{Loy2006}, \cite{ConvSymm2016}) are partially or entirely based on sampling or voting schemes. 
However, despite increased resiliency through such schemes, the current state-of-the-art methods still leave substantial room for improvement.

In this work, we show that the problem of finding mirror symmetry---also known as reflection symmetry or bilateral symmetry---in $\mathbb{R}^n$ can be reduced to a registration problem using a new method that we refer to as Mirror Symmetry via Registration (MSR).
This is accomplished by computing the eigenvector of eigenvalue $-1$ for a transformation matrix computed from reflection and registration mappings. 
We provide straightforward theoretical deductions to support this claim and demonstrate its utility through examples. 

To enhance symmetry detection with MSR, we also present a registration algorithm of the random sample consensus (RANSAC) class for two-dimensional (2D) images.
This algorithm infers optimal parameters from a collection of patch-to-image registrations computed via Normalized Cross-Correlation (NXC) \cite{Lewis95}. 
By combining these approaches, we achieve state-of-the-art performance for 2D symmetry line and segment detection on two independent testing databases: the CVPR 2013 Symmetry Detection from Real World Images competition \cite{Liu2013} and the NYU Symmetry database \cite{ConvSymm2016}.

To highlight the MSR procedure's generality, we also applied it to symmetric three-dimensional (3D) objects from the McGill 3D Shape Benchmark \cite{Siddiqi2008}. 
These tests achieved 86\% accuracy when using an Iterative Closest Point (ICP) algorithm for the underlying registration \cite{Chen1992,Besl1992}.

Finally, we recognize that symmetry detection algorithms can be helpful for analyzing natural systems.
Toward this goal, we apply MSR to start examining questions in the field of neuroscience that are concerned with the degree of symmetry in neuronal morphology and patterning.

The approach we present here exposes symmetry detection to a new line of attack and introduces registration technique developers to a new type of data on which to test novel methods. 
The paper is organized as follows: 
Section~\ref{sec:litrev} reviews existing literature to put MSR in perspective. 
Section~\ref{sec:method} contains mathematical preliminaries and a description of MSR. 
Section~\ref{sec:experiments} details quantitative experiments on 2D and 3D testing databases. Section~\ref{sec:application} describes applications in neuroscience. 
Finally, section~\ref{sec:conclusion} discusses MSR advantages, limitations, and possible future directions.

\begin{figure*}[p]
\centering
\begin{minipage}[c]{0.4\linewidth}
	\flushleft (a)
	\includegraphics[width=\linewidth]{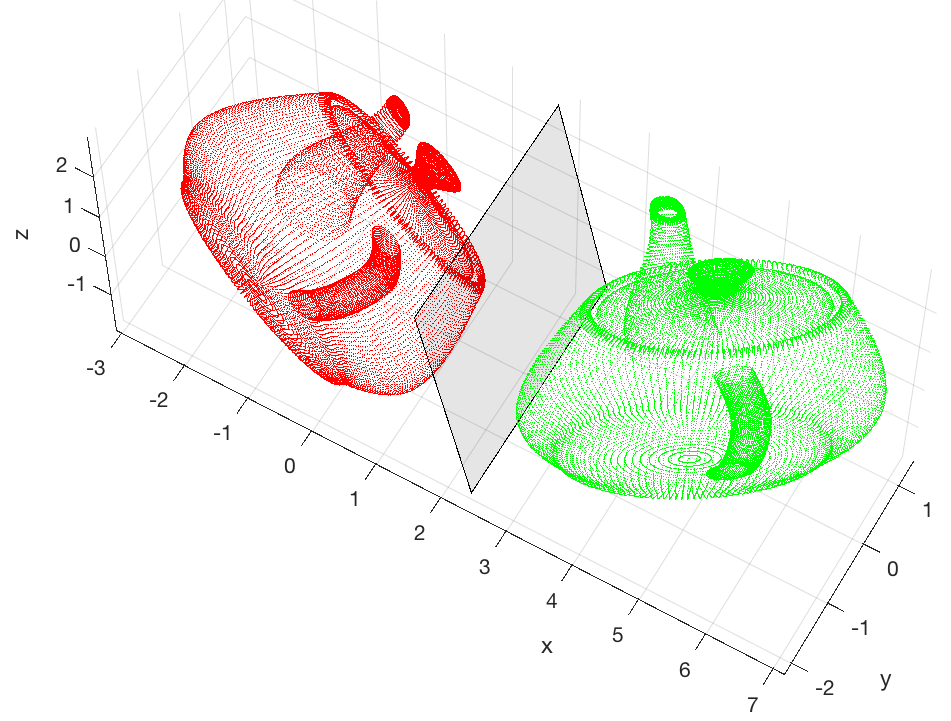}
\end{minipage}
\begin{minipage}[c]{0.4\linewidth}
	\flushleft (b)
	\includegraphics[width=\linewidth]{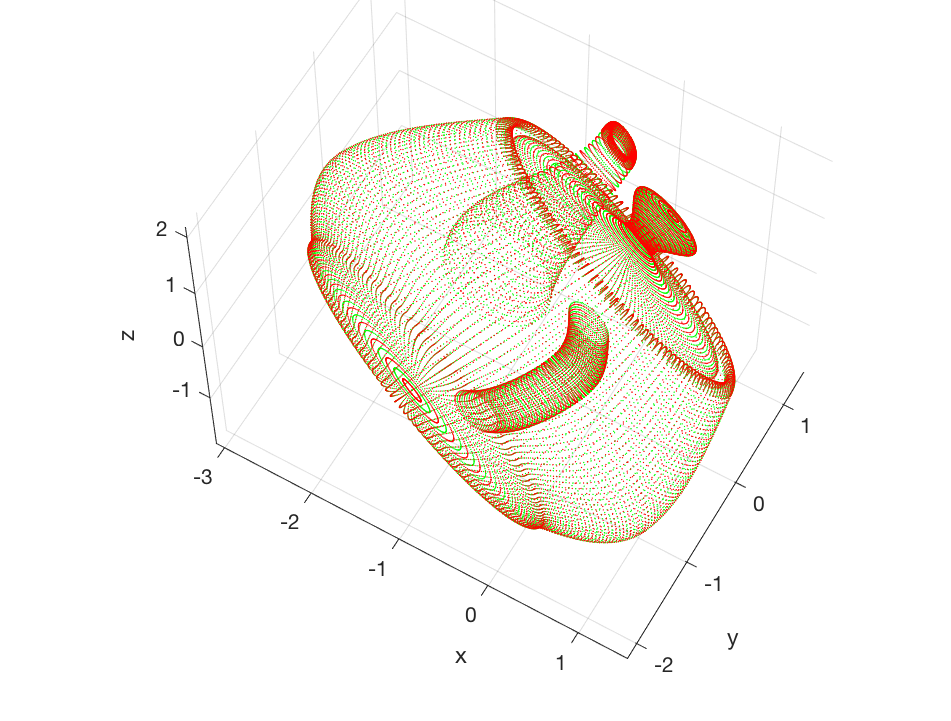} 
\end{minipage}\\
\begin{minipage}[c]{0.4\linewidth}
	\flushleft (c)
	\includegraphics[width=\linewidth]{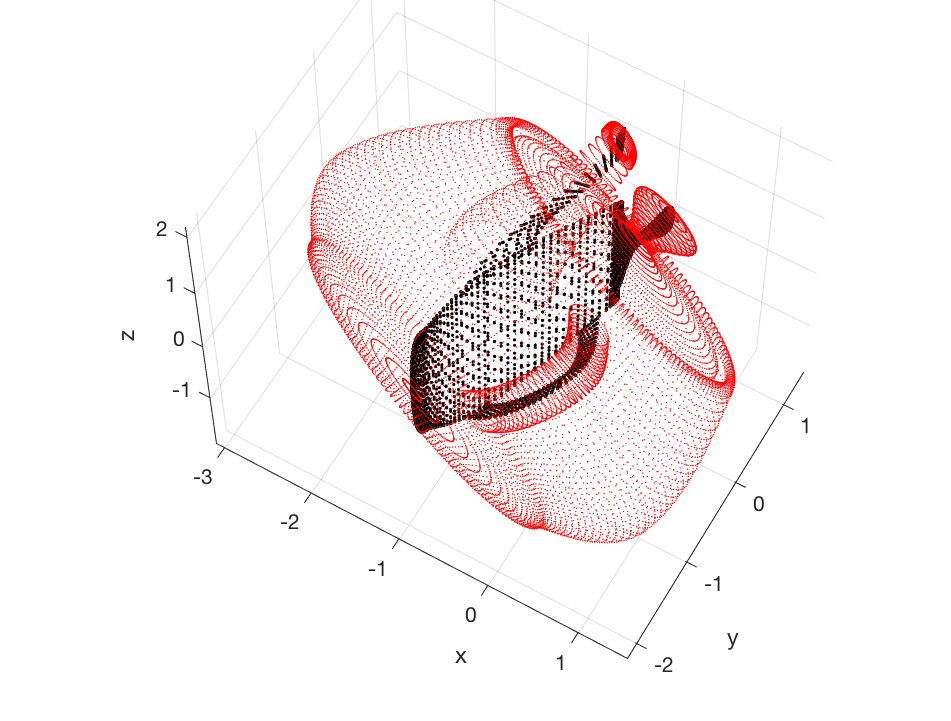}
\end{minipage}
\begin{minipage}[c]{0.4\linewidth}
	\flushleft (d)
	\includegraphics[width=\linewidth]{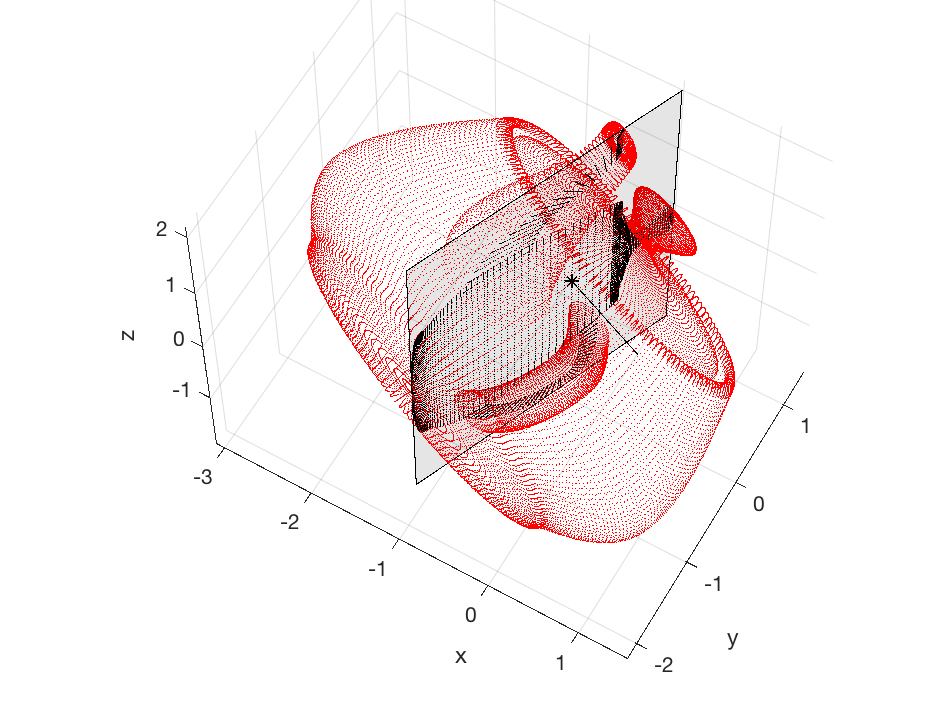}
\end{minipage}\vspace{0.1cm}
\caption{Symmetry plane detection in 3D using MSR. 
(a) The original data (red) is reflected (green) with respect to an arbitrary plane. 
(b) Registration between original and reflected point clouds using an Iterative Closest Point (ICP) algorithm. 
(c) Visualization of every midpoint (black) between a point in the original set and the corresponding point in the reflected set. 
(d) Computing the symmetry plane in this case involves either fitting a plane to the midpoints in (c) or analytically solving an eigenvalue problem on a function of the transformation matrices corresponding to the reflection and ICP registration mappings.}
\label{fig:symplanedet}
\end{figure*}

\section{Previous Work}\label{sec:litrev}

Listed here is a review of some relevant previous work.

\subsection{Mirror Symmetry Detection in 2D}

\noindent \textbf{2006.} \cite{Loy2006}: Scale-invariant feature transform (SIFT) features were grouped into ``symmetric constellations'' by a voting scheme. 
Dominant symmetries present in the image emerged as local maxima.

\noindent \textbf{2012.} \cite{Lee2012}: Generalized mirror symmetry detection to a curved transflection (glide reflection) symmetry detection problem. Estimated symmetry via a set of contiguous local straight reflection axes.

\noindent \textbf{2013.} \cite{Kondra2013}: Developed a 3-step algorithm, wherein (1) SIFT correlation measures are computed along discrete directions, (2) symmetrical regions are identified from matches in the directions characterized by maximum correlations, and then steps (1) and (2) are repeated at different scales. 
\cite{Patraucean2013}: Created a 2-step algorithm, wherein candidates for mirror-symmetric patches are identified using a Hough-like voting scheme and then validated using a principled statistical procedure inspired from \textit{a contrario} theory.
\cite{Michaelsen2013}: Introduced a combinatorial gestalt algebra technique to be used on top of SIFT descriptors. 
\cite{Liu2013}: Evaluated the performance of various symmetry detection methods on a common database, with \cite{Loy2006} emerging as overall winner.

\noindent \textbf{2014.} \cite{Cicconet2014CVPR}: Described a pairwise voting-scheme based on tangents computed via wavelet filtering.
\cite{Cai2014}: Presented an adaptive feature point detection algorithm to overcome susceptibility to clutter in feature-based methods.

\noindent \textbf{2015.} \cite{Wang2015}: Exhibited use of traditional edge detectors and a voting process, respectively, before and after a novel edge description and matching step based on locally affine-invariant features.

\noindent \textbf{2016.} \cite{ConvSymm2016}: Introduced a pairwise convolutional approach to mirror symmetry detection similar to \cite{Cicconet2014CVPR}. 
The method outperformed \cite{Loy2006} by a small margin and its authors released a new database, which we use here for testing.
\cite{Funk2016}: Exploited ambiguities and challenges in symmetry detection to propose a method for producing reCAPTCHA solutions based on symmetry.

\subsection{Mirror Symmetry Detection in 3D}

\noindent \textbf{1997.} \cite{Sun1997}: Converted the symmetry detection problem to the correlation of the Gaussian image.

\noindent \textbf{2002.} \cite{Benz2002}: Presented an approach similar to MSR in the reflection and registration steps.
However, the symmetry plane was fit on the set of midpoints, not obtained as the eigenvalue solution.
Mathematical proofs for the results were not provided and tests were only conducted in 3D for symmetry in human faces.

\noindent \textbf{2006.} \cite{Podolak2006}: Described a planar reflective symmetry transform that captures a continuous measure of a shape's degree of mirror symmetry with respect to all possible planes. 
\cite{Pan2006}: Presented a more robust Gaussian image-based approach.

\noindent \textbf{2011.} \cite{Wu2011}: Introduced a 2-step method, wherein landmark-related region detection is followed by a learning stage that computes a symmetry plane from the landmarks based on training input consisting of standard symmetry planes identified by medical experts. 
\cite{Padia2011}: Reviewed \cite{Benz2002} and ICP variations.

\noindent \textbf{2013.} \cite{Mitra2013}: Discussed applications in computer graphics and geometry that can utilize symmetry information for more effective processing. \cite{Kakarala2013}: Described bilateral symmetry plane estimation for 3D shapes that is carried out in the spherical harmonic domain.

\noindent \textbf{2014.} \cite{Sipiran2014}: Presented an algorithm that generates a set of candidate symmetries by matching local maxima of a surface function based on heat diffusion in local domains, with a global optimum obtained by a voting scheme.

\noindent \textbf{2015.} \cite{Zheng2015}: Developed a skeleton-intrinsic symmetrization method for recovering the aesthetics of mirror symmetry from asymmetric shapes while preserving their general pose. This was accomplished by measuring intrinsic distances over a curve skeleton backbone for symmetry analysis, symmetrizing about the skeleton, and propagating the symmetrization from skeleton to shape.

\noindent \textbf{2016.} \cite{Li2016}: Achieved symmetry plane detection by generating a candidate plane based on a matching pair of sample views and then verifying whether the number of remaining matching pairs fell within a preset minimum number.

\subsection{Mirror Symmetry Detection in $\mathbb{R}^n$}

We are unaware of any previous work claiming to devise a mirror symmetry detection method that is invariant to the dimension of the examined space.

\section{Method}\label{sec:method}

\begin{definition}
[of Mirror Symmetry]
A set of points $P \subset \mathbb{R}^n$ is said to present \emph{mirror}, \emph{reflection}, or \emph{bilateral} symmetry if there exists a hyperplane $H \subset \mathbb{R}^n$ of dimension $n-1$ such that the \emph{mirror reflection} of $P$ with respect to $H$ produces a set of points $Q$ such that $P = Q$.
\end{definition}

\begin{definition}
[of Mirror Reflection]
Let $H \subset \mathbb{R}^n$ be a $(n-1)$-dimensional hyperplane, $v$ a unit vector perpendicular to $H$, and $p$ a fixed point in $H$, so that $H = \{q \in \mathbb{R}^n : \langle q-p, v\rangle = 0\}$.
The \emph{mirror reflection} of a set of points $P$ with respect to $H$ is the set $\{q-2 \langle q-p,v \rangle v : q \in P\}$.
\end{definition}

The mirror reflection with respect to a plane through the origin and with normal vector $v$ is given by $x \mapsto S_v x$, where $S_v = I-2vv^\top$, where $I$ is the identity matrix.
The reflection with respect to a plane through an arbitrary point $p$ and with normal vector $v$ is given by:
$$x \mapsto S_{p,v}(x) = S_vx+2dv \text{ ,}$$
\noindent where $d = \langle p , v \rangle$ is the ``signed'' distance between the plane and the origin. For simplicity of notation, we will henceforth denote $S_{p,v}(x)$ as $S_{p,v}x$.

The symmetry plane in $\mathbb{R}^n$ can be computed in 3 steps, as illustrated in Figure~\ref{fig:symplanedet}:
\begin{enumerate}
\item Reflect original data with respect to an arbitrary plane.
\item Register original and reflected sets.
\item Infer optimal symmetry plane from the parameters of the reflection and registration mappings.
\end{enumerate}

\subsection*{Remarks}

\noindent (a) Depending on the registration algorithm used, it can help to start in MSR step 1 with an arbitrary plane that is near---or a good guess for---the actual symmetry plane. 
We employ this strategy for the application described in Section~\ref{sec:application}.
Alternatively, several runs with different initial planes can be attempted and the one for which the registration algorithm returns the most confident result chosen.
We use this second strategy for the 3D experiments in Section~\ref{sec:experiments}.\vspace{0.2cm}

\begin{figure*}[!h]
\vspace{0.5cm}
\centering
\begin{minipage}[c]{0.19\linewidth}
	\centering
	\includegraphics[width=\linewidth]{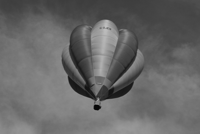}\\(a)
\end{minipage}\hfill
\begin{minipage}[c]{0.19\linewidth}
	\centering
	\includegraphics[width=\linewidth]{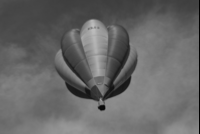}\\(b)
\end{minipage}\hfill
\begin{minipage}[c]{0.19\linewidth}
	\centering
	\includegraphics[width=\linewidth]{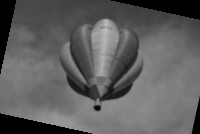}\\(c)
\end{minipage}\hfill
\begin{minipage}[c]{0.19\linewidth}
	\centering
	\includegraphics[width=\linewidth]{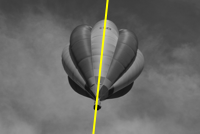}\\(d)
\end{minipage}\hfill
\begin{minipage}[c]{0.19\linewidth}
	\centering
	\includegraphics[width=\linewidth]{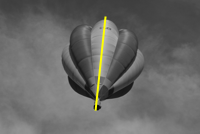}\\(e)
\end{minipage}
\vspace{0.3cm}
\caption{Symmetry line and segment detection in 2D using MSR. 
(a) Input image. 
(b) Mirror reflection of the original image with respect to a vertical line through the center of the image. 
(c) Registration of (b) with respect to (a). 
(d) The symmetry line computed by the proposed algorithm (MSR) is shown in yellow. 
(e) The symmetry segment can be computed from (d) via a post-processing phase, as in \cite{ConvSymm2016}.}
\label{fig:symlinedet}
\end{figure*}

\noindent (b) All steps in the MSR framework are exact (when factoring out numerical errors) except for registration.
If the data is not perfectly mirror symmetric, then the registration will not be precise. 
The MSR approach reduces mirror symmetry detection to a registration problem, with the caveat that its robustness depends entirely on the robustness of the underlying registration method.\vspace{0.2cm}

\noindent (c) MSR step 3 can be performed in one of two ways: either by fitting a plane through the midpoints of all corresponding original-transformed point pairs, or by solving an eigenvalue problem related to the global (reflection and rigid) transformation that was applied to the original data during registration. We adopt the latter approach here.\vspace{0.2cm}

We now mathematically demonstrate why the MSR approach works for detecting mirror symmetry.

Let $P = \{p_1,...,p_N\}$ be a point cloud and $Q = \{q_1,...,q_N\}$ the reflection of $P$ given by $S_{p,v}$, that is, $q_i = S_{p,v}p_i \, \forall i$.

\begin{proposition}
Let $m_i = \frac{1}{2}(p_i+q_i)$, so that $M = \{m_1,...,m_N\}$ is the set of midpoints between corresponding points in $P$ and $Q$. Then the set $M$ is contained in the plane with normal vector $v$ passing through $dv$.
\label{prop1}
\end{proposition}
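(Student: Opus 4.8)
The plan is to verify directly that each midpoint $m_i$ satisfies the equation defining the target plane, which is $\langle x - dv, v\rangle = 0$ for a plane with normal $v$ passing through $dv$. First I would substitute the explicit reflection formula. Since $q_i = S_{p,v}p_i = S_v p_i + 2dv$ with $S_v = I - 2vv^\top$, expanding gives $q_i = p_i - 2\langle p_i, v\rangle v + 2dv$. The one place to be careful is to use the stated form of $S_v$ and to remember that $d = \langle p, v\rangle$ is the fixed signed distance, independent of the index $i$.

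Next I would form the midpoint and simplify. Averaging $p_i$ and $q_i$, the two copies of $p_i$ contribute $p_i$, the reflection term halves to $-\langle p_i, v\rangle v$, and the offset halves to $dv$, giving $m_i = p_i - \langle p_i, v\rangle v + dv$. It is worth noting the geometric structure here: $m_i$ is the orthogonal projection of $p_i$ onto the hyperplane through the origin perpendicular to $v$, translated by $dv$, which already makes the claim plausible. Finally I would test membership by computing $\langle m_i - dv, v\rangle$; the two $dv$ terms cancel and I am left with $\langle p_i, v\rangle - \langle p_i, v\rangle\langle v, v\rangle$. The fact that closes the argument is that $v$ is a unit vector, so $\langle v, v\rangle = 1$ and the two terms cancel to zero. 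Since this holds for every $i$ with no dependence on the particular $p_i$, each $m_i$ lies in the desired plane.

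Because the whole argument is a short linear-algebra computation, I do not expect a genuine obstacle; the only thing to watch is the normalization of $v$, which is exactly what makes the projection term cancel cleanly against $\langle p_i, v\rangle$. Were $v$ not a unit vector, the offset and the cancellation would both have to be rescaled by $\langle v, v\rangle$, and the plane description would need adjusting accordingly.
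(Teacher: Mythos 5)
Your proof is correct and takes essentially the same route as the paper: both verify directly that each midpoint satisfies the plane equation $\langle m_i - dv, v\rangle = 0$. The only cosmetic difference is that you expand $S_v = I - 2vv^\top$ explicitly and invoke $\langle v, v\rangle = 1$, whereas the paper reaches the same cancellation via the symmetry of $S_v$ together with $S_v v = -v$; these are the same computation in different notation.
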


\begin{proof}
For $x \in P$, the reflection by $S_{p,v}$ is $S_v x +2dv$, so
\begin{eqnarray}
\langle \frac{1}{2} (x + S_v x +2dv) -dv , v\rangle =\\
\langle \frac{1}{2} x,v \rangle + \langle \frac{1}{2} S_v x, v \rangle + \langle dv, v \rangle - \langle dv , v\rangle.
\label{eq:sym}
\end{eqnarray}
But $S_v$ is symmetric, so $\langle S_v x,v \rangle =  \langle x, S_v v \rangle$.
Further, $S_v v = -v$ because $S_v$ is the reflection with respect to the plane through the origin with normal vector $v$, so $\langle x, S_v v \rangle = -\langle x, v \rangle$. 
Therefore, (\ref{eq:sym}) is equal to $0$. 
\end{proof}

Let now $R$ be the rigid transformation defined by $R(x) = R_0x+t$, where $R_0$ is a rotation matrix and $t$ a translation vector. 
If we reflect a point $x \in P$ by $S_{p,v}$ and then transform it through $R$, the result is $R_0(S_vx+2dv)+t$.

\begin{proposition}
Let $T = S_vR_0^\top$ and $w$ equal the unit eigenvector of $T$ corresponding to the eigenvalue $-1$. 
That is, $Tw = -w$. 
We will show in \emph{Proposition 3} that such a $w$ exists. 
Let $r = \frac{1}{2}(R_0(2dv)+t)$, with $d$ as previously defined. 
Then the midpoints $\frac{1}{2}(x+R_0(S_vx+2dv)+t)$ lie in the plane with normal vector $w$ passing through $r$.
\end{proposition}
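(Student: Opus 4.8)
The plan is to verify directly that every midpoint $m(x) = \frac{1}{2}(x + R_0(S_v x + 2dv) + t)$ satisfies $\langle m(x) - r, w\rangle = 0$, which is exactly the condition for $m(x)$ to lie in the plane with normal vector $w$ passing through $r$. This mirrors the structure of the proof of Proposition~1, but with the rigid transformation $R$ folded into the reflection.

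First I would subtract $r = \frac{1}{2}(R_0(2dv) + t)$ from $m(x)$. The purpose of defining $r$ in this way is precisely that the affine pieces cancel: the $R_0(2dv)$ and $t$ terms drop out, leaving the purely linear expression $m(x) - r = \frac{1}{2}(x + R_0 S_v x)$. Pairing with $w$ then reduces the whole claim to showing that $\langle R_0 S_v x, w\rangle = -\langle x, w\rangle$.

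The key algebraic step is the adjoint bookkeeping that exploits the definition of $T$. I would move $R_0 S_v$ onto $w$ via the transpose, writing $\langle R_0 S_v x, w\rangle = \langle x, (R_0 S_v)^\top w\rangle = \langle x, S_v^\top R_0^\top w\rangle$. Since $S_v$ is symmetric, $S_v^\top = S_v$, so $(R_0 S_v)^\top = S_v R_0^\top = T$. Hence $\langle R_0 S_v x, w\rangle = \langle x, Tw\rangle = \langle x, -w\rangle = -\langle x, w\rangle$, invoking exactly the defining property $Tw = -w$. Combining the two pieces gives $\langle m(x) - r, w\rangle = \frac{1}{2}(\langle x, w\rangle - \langle x, w\rangle) = 0$, as required.

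There is no genuine obstacle here: the statement is engineered so that the definition $T = S_v R_0^\top$ is precisely what makes the transpose collapse onto the $-1$ eigenvector. The only points requiring a little care are the reversal of factor order under transposition, $(R_0 S_v)^\top = S_v^\top R_0^\top$, and invoking the symmetry of $S_v$ at the right moment; the rest is the same cancellation device used in Proposition~1. The existence of the eigenvector $w$ is assumed at this stage, to be established separately in \emph{Proposition 3}, so I would not address it in this argument.
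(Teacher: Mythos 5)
Your proposal is correct and follows the paper's proof essentially step for step: subtract $r$ so the affine terms $R_0(2dv)$ and $t$ cancel, then move $R_0S_v$ onto $w$ via the transpose and use $S_v^\top = S_v$ together with $Tw = -w$ to get $\langle m(x)-r, w\rangle = 0$. The only (welcome) difference is that you make the use of the symmetry of $S_v$ explicit, which the paper leaves implicit in the step $\langle R_0S_vx, w\rangle = \langle x, S_vR_0^\top w\rangle$.
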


\begin{proof}
\begin{eqnarray}
\langle \frac{1}{2}(x+R_0(S_vx+2dv)+t)-r , w \rangle =\\
\langle \frac{1}{2}(x+R_0(S_vx+2dv)+t)-\frac{1}{2}(R_0(2dv)+t), w \rangle =\\
\frac{1}{2} \langle x+R_0(S_vx), w \rangle =\\
\frac{1}{2} (\langle x, w \rangle + \langle R_0S_vx, w \rangle) =\\
\frac{1}{2} (\langle x, w \rangle + \langle x, S_vR_0^\top w \rangle) =\\
\frac{1}{2} (\langle x, w \rangle + \langle x, -w \rangle) =\\
0.
\end{eqnarray}
\end{proof}

\begin{proposition}
If $S$ is a reflection and $R$ is a rotation, then $SR$ is a reflection.
As a consequence, $SR$ necessarily has $-1$ as an eigenvalue.
\end{proposition}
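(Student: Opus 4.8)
The plan is to strip the statement down to its algebraic core: once we know that $SR$ is an orthogonal matrix whose determinant is $-1$, the existence of an eigenvector for the eigenvalue $-1$ follows, and this is exactly the fact invoked (for $T = S_v R_0^\top$) in the two preceding propositions. So I would not try to argue geometrically that the composite map is literally a mirror reflection across some hyperplane; instead I would establish the two properties ``orthogonal'' and ``$\det = -1$'' and extract the eigenvalue from them.

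First I would record that both factors are orthogonal. The reflection $S_v = I - 2vv^\top$ is symmetric and satisfies $S_v^\top S_v = I$ (a one-line expansion using $v^\top v = 1$), and its determinant is $-1$ since its eigenvalues are $-1$ on the line spanned by $v$ and $+1$ on the orthogonal complement $H$. A rotation $R_0$ is orthogonal by definition with $\det R_0 = +1$. Since the product of orthogonal matrices is orthogonal and the determinant is multiplicative,
\begin{equation*}
(SR)^\top (SR) = R^\top S^\top S R = I, \qquad \det(SR) = \det(S)\det(R) = (-1)(+1) = -1.
\end{equation*}
The same computation applies verbatim to $T = S_v R_0^\top$, since $R_0^\top = R_0^{-1}$ is again a rotation.

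The crux is then to pass from $\det(SR) = -1$ to ``$-1$ is an eigenvalue.'' Here I would use the orthogonality in the form $A^{-1} = A^\top$ for $A = SR$, together with the transpose-invariance of the determinant:
\begin{equation*}
\det(A + I) = \det\bigl(A(I + A^{-1})\bigr) = \det(A)\,\det(I + A^\top) = \det(A)\,\det(A + I).
\end{equation*}
With $\det(A) = -1$ this yields $\det(A+I) = -\det(A+I)$, hence $\det(A + I) = 0$, so $A + I$ is singular and $-1$ is an eigenvalue; the associated kernel vector is the unit eigenvector $w$ promised earlier. An equivalent route is spectral: the eigenvalues of an orthogonal matrix lie on the unit circle and its non-real eigenvalues occur in conjugate pairs whose product is $+1$, so the determinant equals $(+1)^{a}(-1)^{b}$ where $b$ counts the $-1$ eigenvalues; $\det = -1$ forces $b$ odd, hence $b \ge 1$.

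The step I expect to require the most care is precisely this last implication, because the determinant condition by itself is not enough: a general matrix with $\det = -1$ (say eigenvalues $2$ and $-\tfrac12$) need not have $-1$ in its spectrum at all. It is orthogonality that confines the eigenvalues to the unit circle and thereby forces a genuine real eigenvalue at $-1$, so the proof must keep both hypotheses in play rather than the determinant alone. I would also add one clarifying remark: the word \emph{reflection} is used loosely here for an orientation-reversing orthogonal map; in the plane such a map is always a line reflection, whereas for $n \ge 3$ it is in general a rotoreflection (for instance $-I$ when $n$ is odd), but in every dimension the eigenvalue $-1$ is present, which is all the subsequent argument needs.
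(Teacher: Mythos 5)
Your proof is correct, and it takes a genuinely different---in fact more careful---route than the paper's. The paper shares your first half verbatim (orthogonality of the product, multiplicativity of the determinant giving $\det(SR) = -1$), but then simply declares $SR$ to be a reflection, introduces ``the reflection hyperplane $H$ with normal vector $v$,'' and reads off $SRv = -v$. That middle step is exactly what your closing remark flags: an orthogonal map with determinant $-1$ is a hyperplane reflection only when $n = 2$; for $n \ge 3$ it is in general a rotoreflection (e.g., $-I$ in odd dimensions, or a $90^\circ$ rotation about an axis composed with the reflection in the perpendicular plane), which fixes no hyperplane, so the paper's appeal to a normal vector of $H$ is unjustified as written---the eigenvalue conclusion is true, but the stated reason is circular. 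You instead prove the only fact the surrounding propositions actually consume, namely the existence of a unit vector $w$ with $Tw = -w$, directly from orthogonality plus $\det = -1$, via the identity $\det(A+I) = \det(A)\det\bigl((A+I)^\top\bigr) = -\det(A+I)$ (or equivalently the spectral count showing the multiplicity of the eigenvalue $-1$ must be odd), and you correctly stress that the determinant hypothesis alone would not suffice. The trade-off: the paper's argument is shorter and geometrically evocative but strictly valid only in the plane, or under a loose reading of ``reflection'' as ``improper orthogonal map,'' in which case its final sentence still needs an argument like yours to back it up; your version is dimension-independent and rigorous, at the cost of giving up the claim that $SR$ is literally a mirror. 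In short, your proposal does not merely differ from the paper's proof---it repairs it.
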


\begin{proof}
Since $S$ and $R$ are orthogonal, so is $SR$.
Further, since the determinant of the product equals the product of the determinants, the determinant of $SR$ is $-1$. 
Considering the reflection $SR$, let $H$ be the reflection hyperplane with normal vector $v$.
In this case, $SRv = -v$, so $v$ is the eigenvector of $SR$ corresponding to the eigenvalue $-1$.
\end{proof}

\paragraph{Algorithm.}
Given these results, the precise MSR algorithm for finding the mirror symmetry plane for a set of points $P$ is:
\begin{enumerate}
\item Choose an initial reflection plane, given by a point $p$ and a perpendicular vector $v$.
For example, $p$ can be the average (center of mass) of the points in $P$ and $v$ set as the vector $(1,0,...,0)$.
\item Reflect all the points $x \in P$:
$$d = \langle p,v\rangle\text{ ,}$$
$$x \mapsto S_{p,v}x = S_vx+2dv\text{ ,}$$
to obtain a new set of points $Q$.
\item Register $Q$ to $P$ through a rigid transformation, obtaining a rotation matrix $R_0$ and a translation vector $t$.
That is, the registration transformation is given by $x \mapsto R_0x+t$.
\item Compute the eigenvector $\bar{v}$ of the matrix $S_vR_0^\top$ corresponding to the eigenvalue $-1$, where $\bar{v}$ is the vector perpendicular to the symmetry plane.
\item Completely define the symmetry plane by computing a point
$\bar{p}$ through which it passes:
$$\bar{p} = \frac{1}{2}(R_0(2dv)+t)\text{ .}$$

\end{enumerate}

\section{Quantitative Experiments}\label{sec:experiments}

\subsection{Accuracy Metric}

For 2D cases, we examined MSR accuracy using established metrics. 
When detecting symmetry segments, the metric described in \cite{Liu2013} was used. 
When detecting symmetry lines, an extension of the metric for segments \cite{ConvSymm2016} was used. 
In brief, the correctness criteria for segments was based on both angle and center proximity between the prediction result and the ground truth. 
The correctness criteria for lines was similar, except that center proximity was replaced with the distance between the center of the ground truth and the prediction line because the prediction line has no defined center.
For thorough evaluation across approaches, precision/recall curves were generated for each method from up to the top ten results.

For 3D cases, we were unable to find general-purpose databases or accuracy metrics. Therefore, we evaluated MSR accuracy by visual inspection of projections of the data along three mutually perpendicular directions, one of which was orthogonal to the estimated symmetry plane.


\subsection{Consensus of Patch-to-Image Registrations}\label{subsec:cpti}

We initially tested MSR with a number of off-the-shelf registration algorithms in our 2D experiments: ICP on edge maps, intensity-based on images or edge maps, and speeded up robust feature (SURF)-based on images or edge maps. 
However, none of these options produced results with comparable precision/recall numbers obtained using the previous state-of-the-art symmetry detection algorithms.

Therefore, we designed a new registration method based on a consensus over an ensemble of patch-to-image registration outputs (i.e., a RANSAC approach).
First, we assume that the transformation is rigid (rotation and/or translation), which is sufficient for MSR. 
Then, for every angle $\alpha = 0, \frac{360}{N}, 2\frac{360}{N},...,(N-1)\frac{360}{N}$, we sample hundreds of square patches from the moving image and register each with respect to the target image using NXC \cite{Lewis95}, selecting only those registrations for which the maximum in correlation space is above a threshold ($\frac{1}{4}$).
For this process, we found that $N = 6$ typically provides good results. 
Finally, we look for the $K$ best local maxima in the space of registration parameters found via NXC. 
For precision/recall evaluations, we chose $K = 10$.


This NXC-based registration approach performed better in the MSR framework than the others we tested. 
In a one-shot symmetry line detection experiment on the NYU Symmetry database (176 images), our NXC-based registration achieved $95\%$ accuracy, while other methods accomplished accuracies near $73\%$ (Figure~\ref{fig:resultsOneShot}).

\begin{figure}[p!]
\vspace{-0.4cm}
\centering
\includegraphics[width=0.5\linewidth]{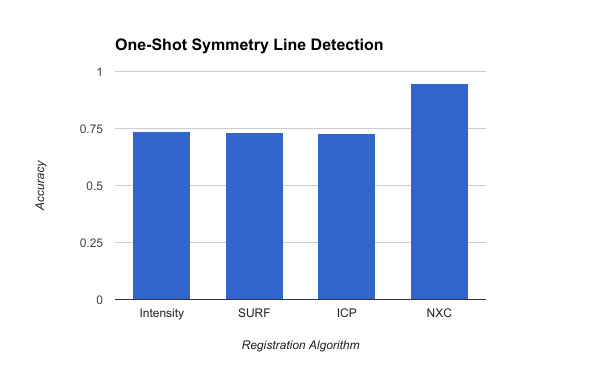}
\caption{Accuracy of one-shot symmetry line detection achieved by combining the MSR approach described in Section~\ref{sec:method} with various registration algorithms. 
Intensity- and SURF-based registrations were performed on the gradients of the images rather than images themselves, as we could not detect a significant difference between the two. 
ICP was applied on top of Ultrametric Contour Maps \cite{Arbelaez2011}. 
NXC indicates results from using the NXC-based registration described in Subsection~\ref{subsec:cpti}. 
For this experiment, the NXC-based registration was applied on image gradients for $40\times40$ patches with a maximum side length of $200$.}
\label{fig:resultsOneShot}
\end{figure}

\subsection{Results}

\subsubsection*{2D}

The previous state-of-the-art for single-symmetry line or segment detection in 2D was a pairwise convolutional method \cite{ConvSymm2016}, referred to here as Convolutional Approach to Reflection Symmetry (CARS). 
Released with its description was the database used for testing (the NYU Symmetry database), with which we tested the MSR approach.
However, given that CARS does not appear to be peer-reviewed yet, we additionally conducted testing with the CVPR 2013 database, for which Loy's method \cite{Loy2006} reported best results.

In accordance with the registration method comparison depicted in Figure~\ref{fig:resultsOneShot}, we adopted the NXC-based registration described in Subsection~\ref{subsec:cpti} to compute precision/recall curves for evaluation.
Though the MSR method outputs only \emph{lines} for 2D cases, not \emph{segments} (limited subsets of lines), the latter is required for proper comparison with CARS.
For this reason, we post-processed the symmetry line resulting from MSR into segments using a previously reported algorithm \cite{ConvSymm2016}. 
Evaluation results are shown in Figure~\ref{fig:results2D} and are accompanied by examples of symmetry line detection outputs in Figure~\ref{fig:outputs}.


\begin{figure}[p!]
\vspace{-0.2cm}
\centering
\includegraphics[width=0.5\linewidth]{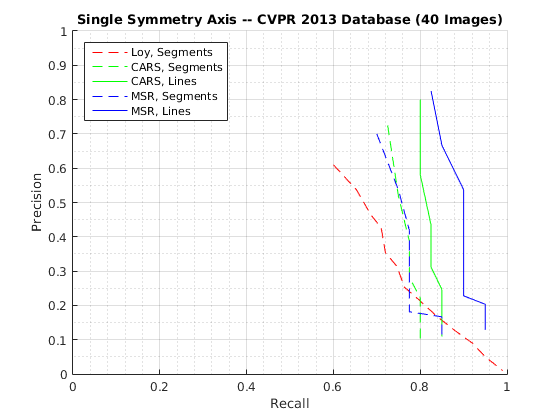}\\ \vspace{0.2cm}
\includegraphics[width=0.5\linewidth]{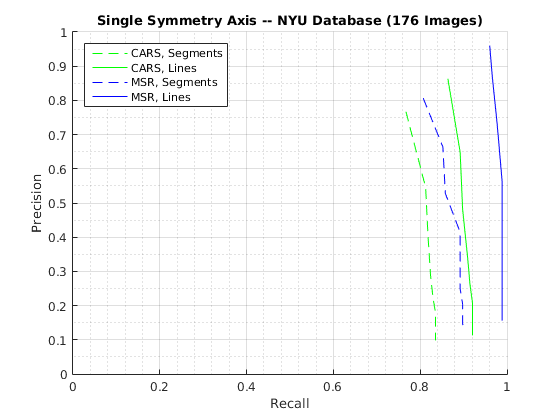}
\caption{Single-symmetry detection on the CVPR 2013 and NYU Symmetry databases. 
Loy refers to \cite{Loy2006}, which performed best in the CVPR 2013 competition.
CARS refers to \cite{ConvSymm2016}, which yielded the previous state-of-the-art, though the article is not yet peer-reviewed. 
The method we describe here is referred to as MSR (Mirror Symmetry via Registration).}
\label{fig:results2D}
\end{figure}

For 2D symmetry detection, our MSR approach outperforms the previous state-of-the-art peer-reviewed method \cite{Loy2006} for single-symmetry \emph{segment} detection on the CVPR 2013 database, while reaching similar performance as CARS (pre-print \cite{ConvSymm2016}). 
Additionally, MSR outperforms CARS for single-symmetry \emph{line} detection on this database.
Note that line detection results for \cite{Loy2006} were not reported and are therefore not available for comparison.
MSR also outperforms CARS on both segment and line detection on the NYU Symmetry database.


\begin{figure}[p!]
\centering
\begin{minipage}[c]{0.2\linewidth}
	\centering
	\includegraphics[width=\linewidth]{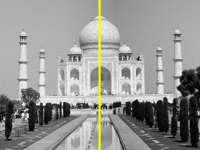}
\end{minipage}\hspace{0.5cm}
\begin{minipage}[c]{0.2\linewidth}
	\centering
	\includegraphics[width=\linewidth]{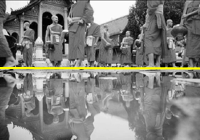}
\end{minipage}\\ \vspace{0.5cm}
\begin{minipage}[c]{0.2\linewidth}
	\centering
	\includegraphics[width=\linewidth]{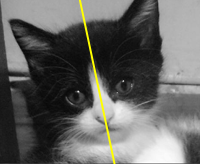}
\end{minipage}\hspace{0.5cm}
\begin{minipage}[c]{0.2\linewidth}
	\centering
	\includegraphics[width=\linewidth]{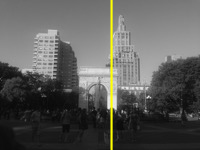}
\end{minipage}\\ \vspace{0.5cm}
\begin{minipage}[c]{0.2\linewidth}
	\centering
	\includegraphics[width=\linewidth]{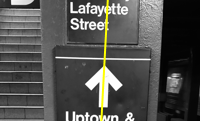}
\end{minipage}\hspace{0.5cm}
\begin{minipage}[c]{0.2\linewidth}
	\centering
	\includegraphics[width=\linewidth]{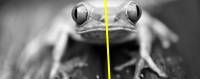}
\end{minipage}\\ \vspace{0.5cm}
\begin{minipage}[c]{0.2\linewidth}
	\centering
	\includegraphics[width=\linewidth]{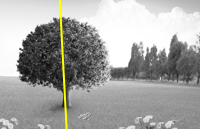}
\end{minipage}\hspace{0.5cm}
\begin{minipage}[c]{0.2\linewidth}
	\centering
	\includegraphics[width=\linewidth]{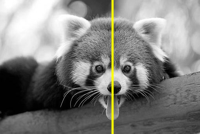}
\end{minipage}
\\ \vspace{0.5cm}
\begin{minipage}[c]{0.2\linewidth}
	\centering
	\includegraphics[width=\linewidth]{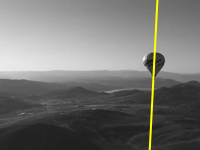}
\end{minipage}\hspace{0.5cm}
\begin{minipage}[c]{0.2\linewidth}
	\centering
	\includegraphics[width=\linewidth]{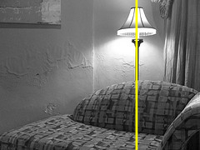}
\end{minipage}
\vspace{0.5cm}
\caption{Sample of MSR results for 2D images from the NYU Symmetry database.}
\label{fig:outputs}
\end{figure}

\begin{figure}[!h]
\centering
\includegraphics[width=0.7\linewidth]{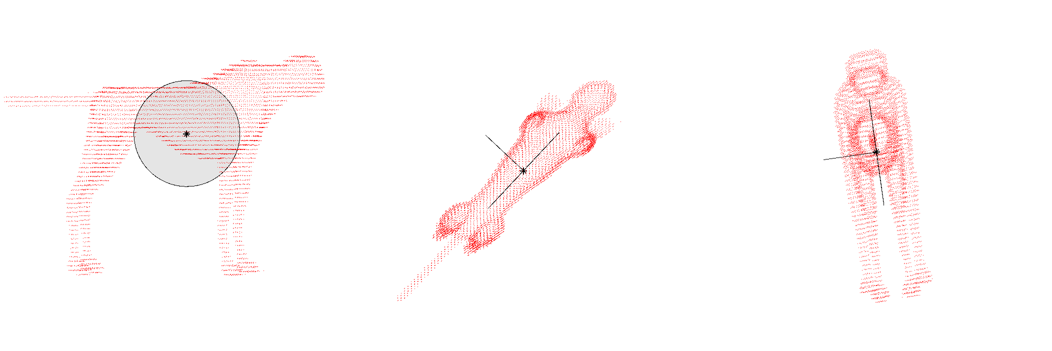}\\
\includegraphics[width=0.7\linewidth]{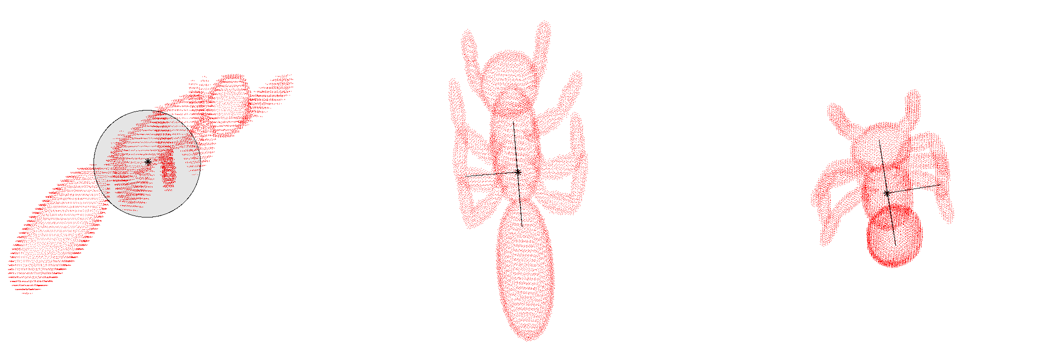}\\
\includegraphics[width=0.7\linewidth]{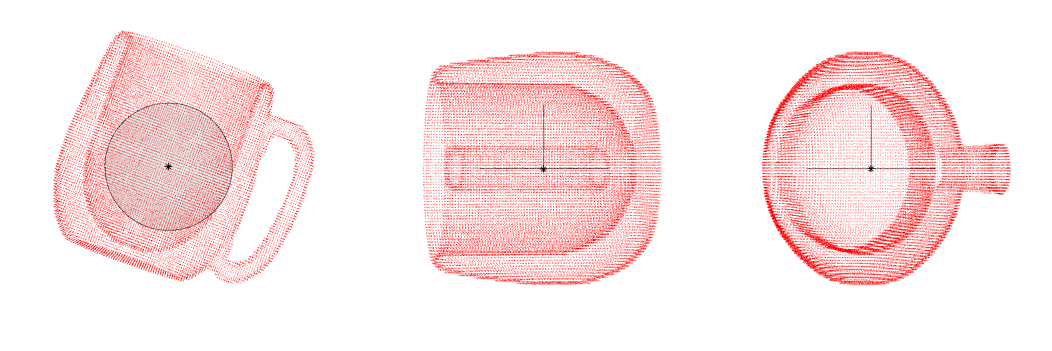}\\
\includegraphics[width=0.7\linewidth]{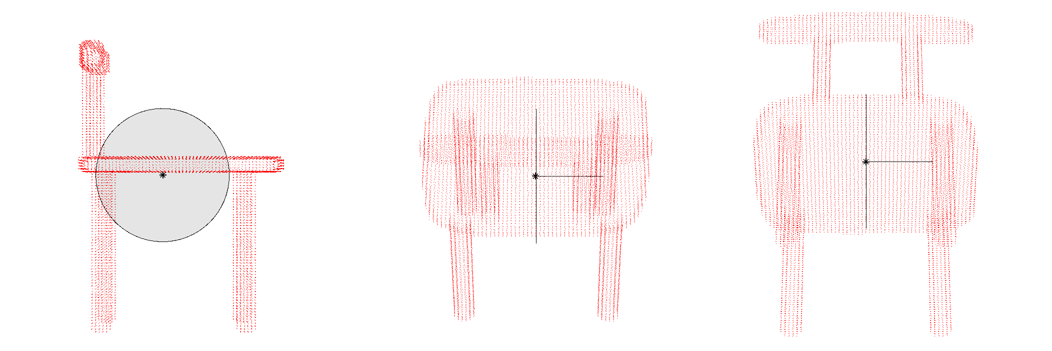}\\
 \includegraphics[width=0.7\linewidth]{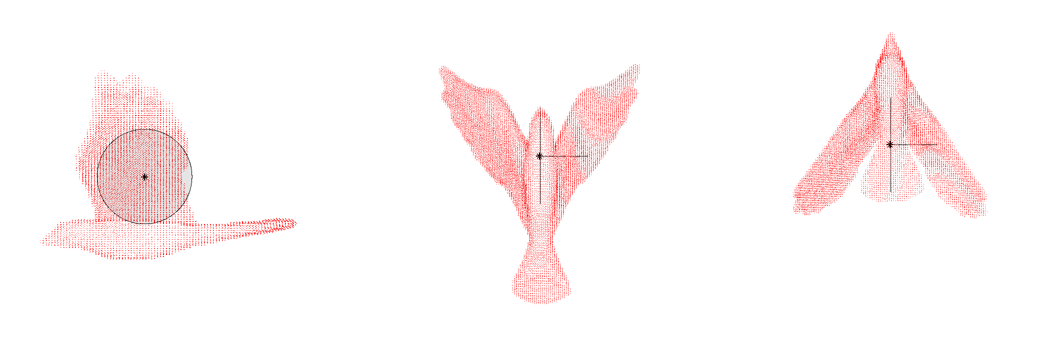}\\
\caption{Sample of MSR results for 3D shapes from the McGill 3D Shape Benchmark \cite{Siddiqi2008}. 
Each row contains the same object with three columns showing mutually perpendicular views. 
The left-most view in each row is orthogonal to the MSR-computed symmetry plane.}
\label{fig:objects}
\end{figure}

\subsubsection*{3D}

To the best of our knowledge, there are no general-purpose databases or accuracy metrics for 3D mirror symmetry detection tests. 
We thus created a testing database consisting of hand-picked 203 symmetric 3D shapes from the McGill 3D Shape Benchmark \cite{Siddiqi2008}.
The included shapes consist of surface points corresponding to objects such as cups, airplanes, and insects.
Because each shape was represented by a set of points, we chose the ICP algorithm \cite{Chen1992,Besl1992} as the registration back-end for all 3D testing.

We ran the MSR method three times per object, each with a different initial reflection hyperplane.
These hyperplanes were always selected from the set centered at the origin with perpendicular vectors given by the canonical basis $(1,0,0)$, $(0,1,0)$, $(0,0,1)$. 
As the final solution, we chose the result whose registration confidence was highest.

By visual inspection of projections along the three mutually perpendicular vectors, one of which was always orthogonal to the symmetry plane, we found that MSR achieved 86\% accuracy. 
From the set of 203 shapes, symmetry was correctly detected in 177. 
Examples are shown in Figure~\ref{fig:objects}.


\section{Application}\label{sec:application}

This project was largely driven by a practical application in the field of neuroscience. 
We were interested in investigating the degree of bilateral symmetry in myelinated axons present throughout the body of a larval zebrafish. 
Myelinated axon reconstructions were manually extracted from serial-section electron micrographs. 
The resulting data consisted of curves represented as sequences of points in 3D, which we refer to as \emph{skeletons}. 
We first sought to find the plane of bilateral symmetry given that the projections appeared nearly mirror symmetric. 
A visually acceptable result was obtained by application of the MSR approach with the ICP algorithm as the registration back-end and a manually selected initial reflection plane corresponding to the vector given by the canonical basis $(1,0,0)$.

Figure~\ref{fig:plane} illustrates the myelinated axon reconstructions and the MSR-computed symmetry plane using ICP as the registration back-end.
Figure~\ref{fig:projections} shows projection of the data along three mutually perpendicular directions, where the side projection is orthogonal to the symmetry plane.

\begin{figure}[p!]
\centering
\includegraphics[width=0.4\linewidth]{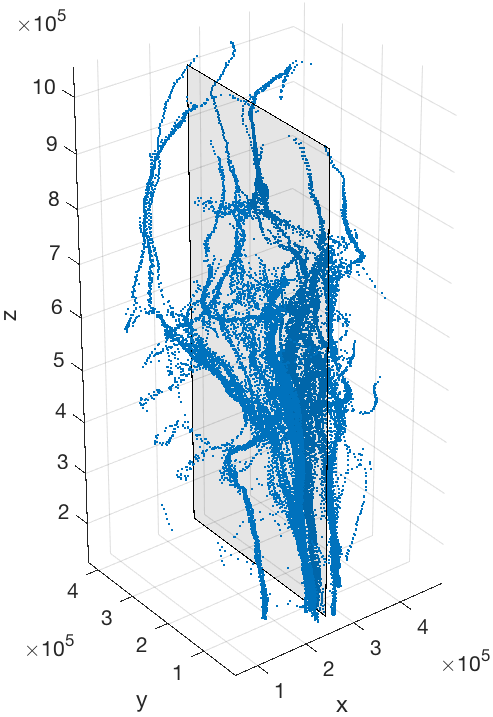}
\caption{Symmetry plane computed using the MSR approach for myelinated axon reconstructions (blue) obtained from a larval zebrafish using serial-section electron microscopy.}
\label{fig:plane}
\end{figure}

\begin{figure}[p!]
\centering
\includegraphics[width=0.5\linewidth]{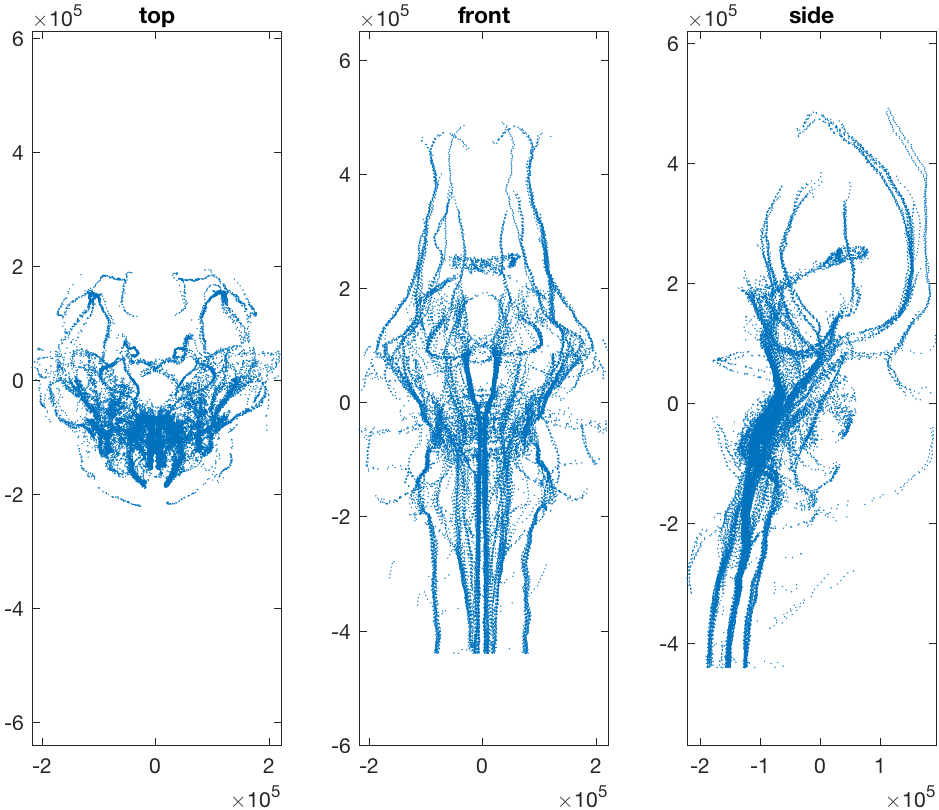}
\caption{Projections of myelinated axon reconstructions along three mutually perpendicular directions. 
The side projection is orthogonal to the MSR-computed symmetry plane.}
\label{fig:projections}
\end{figure}

We next sought to find left-right pairings of symmetric skeletons.
Each skeleton $s$ is a discrete curve in $\mathbb{R}^n$:
$$s = \{s_i : i = 1,...,n_s\}\text{ .}$$
Given two skeletons $s$ and $t$ and a reference symmetry plane $H$, a pairwise symmetry measure can be computed by comparing the original $s$ with the reflection of $t$ with respect to $H$. 
We used Dynamic Time Warping (DTW), a variation of dynamic programming that is widely used for sequence matching \cite{Li2009}, to compute a similarity cost for each original-reflection pair.
To the resulting matrix of pairwise costs $C$, where $C(i,j) = C(j,i)$ is the symmetry measure between skeletons of indexes $i$ and $j$, we applied the Munkres assignment algorithm \cite{Munkres1957} (also known as the Hungarian method) to compute a globally optimal pairwise assignment.

\section{Conclusion}\label{sec:conclusion}

In this paper, we introduced Mirror Symmetry via Registration (MSR), a new framework for mirror symmetry detection that is based on registration and invariant to dimension.
For all but the registration phase, this approach is mathematically exact.
That is, mirror symmetry detection in $\mathbb{R}^n$ is as good as the best available registration method.
In addition, we described a new 2D image registration algorithm based on RANSAC over a set of patch-to-image registrations.

To illustrate MSR performance, we provided experimental results from testing on 2D and 3D databases.
To show the utility of MSR in analyses of natural systems, we described its application to 3D symmetry detection in the myelinated axons of a larval zebrafish.

One limitation of MSR is that it does not output the intersection of the computed symmetry hyperplane with the symmetric object. 
In 2D, for example, it only outputs the symmetry line, not the symmetry segment.

Potential improvements to MSR include its extension to enable detection of multiple symmetry axes and, on the theoretical side, a metric for quantifying plane similarity in $\mathbb{R}^n$ for $n > 2$ should be developed to better measure accuracy.

{\small
\bibliographystyle{ieee}
\bibliography{egbib}
}

\end{document}